\definecolor{Colors-A}{RGB}{230,159,0}  
\definecolor{Colors-C}{RGB}{86,180,233}  
\definecolor{Colors-B}{RGB}{0,158,115}  
\definecolor{Colors-D}{RGB}{213,94,0}  
\definecolor{Colors-E}{RGB}{204,121,167}  
\definecolor{Colors-F}{RGB}{0,114,178}  
\definecolor{Colors-G}{RGB}{240,228,66}  
\definecolor{Input-Color}{RGB}{229,158,197}
\definecolor{Param-Color}{RGB}{239,231,119}
\definecolor{Output-Color}{RGB}{141,176,252}
\newcommand{\Reals}{\mathbb{R}}
\newcommand{\RealsPos}{\mathbb{R}_{> 0}}
\newcommand{\RealsNonNeg}{\mathbb{R}_{\geq 0}}
\newcommand{\Nats}{\mathbb{N}}
\newcommand{\varvec}[1]{\vec{#1}}
\newcommand{\mat}[1]{\vec{#1}}
\newcommand{\degin}{\mathrm{degin}}
\DeclareMathOperator*{\argmax}{arg\,max}
\newcommand{\ReLU}[1]{{\left[#1\right]}^+}
\newcommand{\NNExtra}[1][]{%
  \mathrm{net}_{#1}
}
\newcommand{\NN}{\NNExtra[\vec{\theta}]}
\newcommand{\NNIn}[1][{}]{\NNExtra[\mathrm{In}_{#1}]}
\newcommand{\NNSat}[1][{}]{\NNExtra[\mathrm{Sat}_{#1}]}
\newcommand{\textcite}[1]{\textbf{#1}}
\begin{document}
\title{Verifying Global Neural Network Specifications using Hyperproperties}
\titlerunning{Verifying Global Neural Network Specifications using Hyperproperties}
%
\author{%
  David Boetius\orcidlink{0000-0002-9071-1695} \and
  Stefan Leue\orcidlink{0000-0002-4259-624X}
}
\authorrunning{D. Boetius \and S. Leue}
%
\institute{University of Konstanz, 78457 Konstanz, Germany
\email{\{david.boetius,stefan.leue\}@uni-konstanz.de}}
\maketitle
\begin{abstract}
  Current approaches to neural network verification focus on 
  specifications that target small regions 
  around known input data points, such as local robustness.
  Thus, using these approaches, we can not obtain guarantees for inputs 
  that are not close to known inputs.
  Yet, it is highly likely that a neural network will encounter such
  truly unseen inputs during its application.
  We study global specifications that~---~when satisfied~---~provide
  guarantees for all potential inputs.
  We introduce a hyperproperty formalism that allows for 
  expressing global specifications such as monotonicity,
  Lipschitz continuity, global robustness, and dependency fairness.
  Our formalism enables verifying global specifications using existing neural network
  verification approaches by leveraging capabilities for verifying general computational
  graphs.
  Thereby, we extend the scope of guarantees that can be provided 
  using existing methods.
  Recent success in verifying specific global specifications shows
  that attaining strong guarantees for all potential data points
  is feasible.

  \keywords{Neural Network Verification \and Safe Deep Learning \and Hyperproperties \and General Computational Graphs.}
\end{abstract}
\section{Introduction}
Deep learning is a game changer for research, education, business 
and beyond~\cite{DwivediKshetriHughesEtAl2023,EloundouManningMishkinEtAl2023}.
Yet, we remain unable to provide strong guarantees on the behaviour of
neural networks.
In particular, while neural network verification in principle can provide
strong guarantees, current approaches almost exclusively consider \emph{local} 
specifications~\cite{%
  BakLiuJohnson2021,%
  FerrariMuellerJovanovicEtAl2022,%
  HenriksenLomuscio2021,%
  MuellerBrixBakEtAl2022,%
  SinghGehrPueschelEtAl2019,%
  ZhangWangXuEtAl2022%
}
that only apply to small regions around known input data points.
This means that the currently widely-used specifications 
only sparsely cover the input space, 
providing no guarantees for inputs that are not close to known inputs. 
In contrast, \emph{global} specifications cover the entire input space.

We propose a specification formalism for neural networks
that encompasses a rich class of global specifications while enabling 
verification using existing verifier technology.
In particular, we show how monotonicity, Lipschitz continuity,
two notions of global robustness~\cite{%
  KatzBarrettDillEtAl2017a,LeinoWangFredrikson2021%
}, 
and dependency fairness~\cite{%
  GalhotraBrunMeliou2017,UrbanChristakisWuestholzEtAl2020%
} can be expressed using our formalism.

As noted in~\cite{SeshiaDesaiDreossiEtAl2018}, global specifications
such as monotonicity and global robustness
are hyperproperties~\cite{ClarksonSchneider2008}.
In difference to regular properties that only consider one network execution at a time, 
hyperproperties relate executions for several inputs of the same neural network to each other.
This allows us, for example, to express a na{\"\i}ve notion of global robustness
stating that an arbitrary input and a second input that lies close 
need to receive the same classification.

A central aspect of our formalism is that we use auxiliary neural networks
to define input sets and output sets.
By leveraging capabilities for verifying general computational
graphs~\cite{XuShiZhangEtAl2020b}, the auxiliary networks,
together with \emph{self-composition}~\cite{ClarksonSchneider2008}, 
allow for verifying hyperproperties using existing neural network
verification approaches.
Here, the role of the auxiliary neural networks is to make complex
hyperproperty input and output sets accessible
to existing verification approaches. 
Concretely, we design an auxiliary neural network to generate
the tuples of inputs that need to be compared
to determine whether a hyperproperty is satisfied. 
Another auxiliary neural network detects whether the outputs
a network produces for these inputs satisfy the output constraint.
For the na{\"\i}ve notion of global robustness, this means that 
we derive a neural network that generates arbitrary pairs of 
inputs that are close to each other
and another neural network that detects whether two outputs 
represent the same classification.
Importantly, these auxiliary neural networks \emph{exactly} capture the targeted input
and output constraints using standard neural network components.

Recent success in verifying global robustness~\cite{WangHuangZhu2022}
and global individual fairness~\cite{UrbanChristakisWuestholzEtAl2020}
demonstrates that verifying global specifications is feasible.
Our formalism is a general framework for global specifications targeting
existing verifiers~\cite{%
  FerrariMuellerJovanovicEtAl2022,%
  KatzHuangIbelingEtAl2019,%
  SinghGehrPueschelEtAl2019,%
  ZhangWangXuEtAl2022%
}.
While our formalism does not alleviate the need for specialised techniques,
such as the Interleaving Twin Encoding~\cite{WangHuangZhu2022},
it allows for
\begin{enumerate}
  \item Comparing general-purpose verifiers
    with specialised verifiers for specific global specifications and 
  \item Applying general-purpose verifiers to 
  global specifications for which no specialised verifier exists.
\end{enumerate}

\section{Preliminaries}
We consider verifying whether a neural network conforms to a 
global specification.
Neural networks are computational graphs~\cite{GoodfellowBengioCourville2016}.
Global specifications are formalised using 
hyperproperties~\cite{ClarksonSchneider2008,SeshiaDesaiDreossiEtAl2018}.
\begin{definition}[Computational Graph]
  A \emph{computational graph} is a directed acyclic graph with computations~\((V, E, h)\), 
  where~\(V = \{1, \ldots, L\}\) with~\(L \in \Nats\) 
  is the set of nodes,~\(E \subseteq V \times V\)
  is the edge relation and~\(h = (h_1, \ldots, h_L)\) is the computations tuple.
  Let~\(\degin: V \to \Nats\) denote the in-degree.
  The computation of node~\(i \in V\) is~\(%
  h_i: \Reals^{m_{k_1}} \times \cdots \times \Reals^{m_{k_{\degin(i)}}} \to \Reals^{m_i},
  \) where~\(m_i \in \Nats\) is the output dimension of node~\(i\)
  and~\(k_1, \ldots, k_{\degin(i)} \in \{i \mid (k, i) \in E\}\)
  with~\(k_1 \leq \cdots \leq k_{\degin(i)}\) are the direct predecessors of~\(i\).
\end{definition}

\begin{definition}[Neural Network]
  A \emph{neural network}~\(\NN: \Reals^n \to \Reals^m\), \(n, m \in \Nats\)
  is a composition of affine transformations and non-affine activation functions 
  represented by a computational graph~\((V, E, h)\) 
  with a source~\(i\) and a single sink~\(j\),
  such that~\(h_i: \{\emptyset\} \to \Reals^{n}\)
  and~\(h_j: \Reals^{m_{k_1}} \times \cdots \times \Reals^{m_{k_{\degin(j)}}} \to \Reals^m\).
  The source~\(i\) is the \emph{input} of~\(\NN\).
  The remaining sources of the computational graph together form the 
  \emph{parameters}~\(\vec{\theta}\) of~\(\NN\).
  The sink~\(j\) is the \emph{output} of~\(\NN\).
  For classification tasks,~\(\argmax_{j = 1}^m \NN(\vec{x})\)
  is the class~\(\NN\) assigns to an input~\(\vec{x} \in \Reals^n\).
\end{definition}
Figure~\ref{fig:cg-residual-unit} contains the computational graph
of a residual unit~\cite{HeZhangRenEtAl2016a} as an example.
This graph defines the steps necessary for computing the output
of a residual unit, given an input.
It also allows for computing gradients and verifying a residual unit.
Assume we want to compute the outputs of a neural network for an input~\(\vec{x} \in \Reals^n\).
Also, assume we have a parameter assignment~\(\vec{\theta}\).
We assign~\(\vec{x}\) to the network input node~\(i\) and 
the corresponding parameter values to the remaining sources.
Now, computing the outputs corresponds to a forward walk over the computational graph,
propagating the computation results of each node to its direct successors.
Similarly, a backwards walk from sinks to sources allows for computing the gradients
of the sink with respect to each source (backpropagation).
Forward and backwards walks also allow for computing certified lower and upper bounds
on the network output that can be used for verifying the 
neural network~\cite{XuShiZhangEtAl2020b}.

\begin{figure}
  \centering
  \includegraphics[width=.75\linewidth]{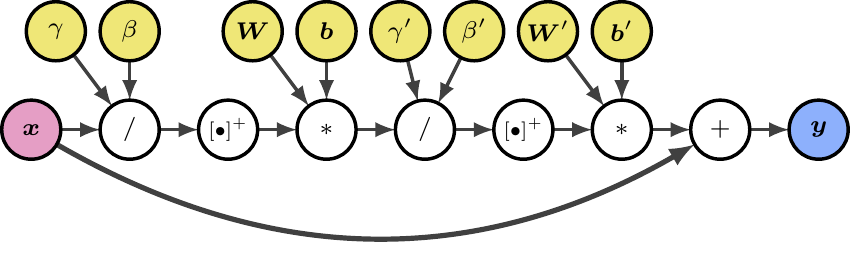}
  \caption{%
    \textbf{The computational graph of a residual unit~\cite{HeZhangRenEtAl2016a}.}
    In this figure,~\(\ast\) denotes convolution,~\(/\) denotes batch 
    normalisation,~\(\ReLU{\bullet}\) denotes ReLU, and~\(+\) denotes addition.
    We use pink nodes~\includegraphics{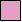} for inputs, 
    yellow~\includegraphics{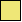} for parameters,
    and blue~\includegraphics{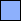} for outputs.
  }\label{fig:cg-residual-unit}
\end{figure}

Verifying a neural network means that we want to automatically 
prove or disprove whether the neural network satisfies a \emph{specification}.
A specification is a set of properties.

\begin{definition}[Property]\label{defn:prop}
  A \emph{property}~\(\varphi = (\mathcal{X}_\varphi, \mathcal{Y}_\varphi)\) is a tuple
  of an \emph{input set}~\(\mathcal{X}_\varphi \subseteq \Reals^n\) and an
  \emph{output set}~\(\mathcal{Y}_\varphi \subseteq \Reals^m\),~\(n, m \in \Nats\).
  We write~\(\NN \vDash \varphi\) when a neural network~\(\NN: \Reals^n \to \Reals^m\)
  \emph{satisfies} the property~\(\varphi\). Specifically,
  \begin{align*}
    \NN \vDash \varphi &\Leftrightarrow 
    \forall \vec{x} \in \mathcal{X}_\varphi: \NN(\vec{x}) \in \mathcal{Y}_\varphi.
  \end{align*}
  We call an input~\(\vec{x} \in \mathcal{X}_\varphi\) for which~\(\NN(\vec{x}) \notin \mathcal{Y}_\varphi\)
  a \emph{counterexample}.
\end{definition}

A verifier determines whether a neural network~\(\NN\) satisfies a property~\(\varphi\).
We require verifiers to \textbf{1.} report property satisfaction if and only if
the property is indeed satisfied (\emph{soundness}) and 
\textbf{2.} to terminate (\emph{completeness}).
In this paper, we only require verifiers
to support bounded hyperrectangles as property input sets and 
the non-negative real numbers as output set.
Practically, verifiers can also handle more complicated input and output sets.

For formalising global specifications, we make use of \emph{hyperproperties}.
Hyperproperties extend properties by considering multiple input variables
and input-dependent output sets.
\begin{definition}[Hyperproperty]\label{defn:hyperprop}
  A \emph{hyperproperty}~\(\psi=(\mathcal{X}_\psi, \mathcal{Y}_\psi)\) is a tuple
  of a \emph{multi-variable input set}~\(\mathcal{X}_\psi \subseteq {\left(\Reals^n\right)}^v\)
  and an \emph{input-dependent output set}
  \begin{align*}
    \mathcal{Y}_\psi &\subseteq 
    \underbrace{\Reals^n \times \cdots \times \Reals^n}_{v\ \text{times}} \times 
    \underbrace{\Reals^m \times \cdots \times \Reals^m}_{v\ \text{times}},
  \end{align*}
  where~\(n, m, v \in \Nats\).
  For a neural network~\(\NN: \Reals^n \to \Reals^m\),
  we write~\(\NN \vDash \psi\) if
  \begin{align*}
    \forall \vec{x}^{(1)}, \ldots, \vec{x}^{(v)} \in \mathcal{X}_\psi:
    \left(
      \vec{x}^{(1)}, \ldots, \vec{x}^{(v)}, 
      \NN{\left(\vec{x}^{(1)}\right)}, \ldots,
      \NN{\left(\vec{x}^{(v)}\right)}
    \right) \in \mathcal{Y}_\psi.
  \end{align*}
\end{definition}

\section{Formalising Global Specifications using Hyperproperties}
Global specifications allow for expressing desired behaviour for the entire input
domain of a neural network while local specifications only apply to small regions 
around known inputs.
This property of local specifications brings with it 
that we have a fixed reference point for each property
in a local specification.
We typically do not have such a fixed reference point for global specifications,
since they apply to the entire input domain.

For example, a local robustness property expresses that
a classifier assigns the same class to all inputs 
that lie within a small~\(L_p\)-ball~\(\mathcal{B}_p(\vec{x})\) 
around a fixed input point~\(\vec{x}\).
Because we have this fixed input~\(\vec{x}\) as a reference point, 
we know the class that should be assigned to all the inputs in~\(\mathcal{B}_p(\vec{x})\).
Knowing this class allows for judging whether an input~\(\vec{x}' \in \mathcal{B}_p(\vec{x})\) 
is a counterexample to the local robustness property by executing the network
once for~\(\vec{x}'\).

If we now look at global robustness, we find that it does not suffice to consider
a single execution of a network to check for specification violations.
As the inputs now are arbitrary inputs from the entire input domain,
we can not determine whether robustness is violated by looking only at the output
for one input~\(\vec{x}^{(1)}\).
Instead, we need to find another 
input~\(\vec{x}^{(2)} \in \mathcal{B}_p{\left(\vec{x}^{(1)}\right)}\)
such that the classes that a network assigns to~\(\vec{x}^{(1)}\) 
and~\(\vec{x}^{(2)}\) do not match.
Only in pair, these inputs form a counterexample.
The necessity to compare outputs for multiple inputs requires us to adopt
hyperproperties for formalising global specifications.

If we look more closely at our example of global robustness, we find that
requiring the points in all~\(L_p\)-balls to have the same output forces
the network to produce the same output for all inputs.
This means that we also have to consider more complicated output sets for global specifications.
In this case, we either need to allow small changes in class scores (Example~\ref{example:global-robust-1})
or devise special rules for points close to the decision boundary (Example~\ref{example:global-robust-2}).
Furthermore, if we express global robustness as Lipschitz 
continuity~\cite{CisseBojanowskiGraveEtAl2017} (Example~\ref{example:lipschitz-cont}), 
our output set needs to be \emph{input-dependent}.
This means that it does not suffice to only compare network outputs with network outputs 
to determine whether a specification is violated. 
Instead, we also need to take the inputs that lead to the observed outputs into account.

For the reasons outlined above, we consider hyperproperties with multi-variable input sets
and input-dependent output sets as in Definition~\ref{defn:hyperprop} for formalising
global specifications.
To leverage existing neural network verification approaches for verifying
these hyperproperties, we express the multi-variable input set and the 
input-dependent output set using auxiliary neural networks.

\begin{definition}[Neural-Network-Defined Hyperproperty]\label{defn:nn-hyperprop}
  Let~\(n, m, v, w \in \Nats\).
  A \emph{Neural-Network-Defined Hyperproperty (NNDH)} is a 
  hyperproperty~\(\psi = \left(\mathcal{X}_\psi, \mathcal{Y}_\psi\right)\),
  where~\(\mathcal{X}_\psi = \left\{\NNIn(\vec{w}) \mid \vec{w} \in \mathcal{W}\right\}\)
  and
  \begin{equation*}
    \mathcal{Y}_\psi = \left\{
      \vec{x}^{(1)}, \ldots, \vec{x}^{(v)}, \vec{y}^{(1)}, \ldots, \vec{y}^{(v)} \,\left|\,
      \NNSat{\left(\vec{x}^{(1)}, \ldots, \vec{x}^{(v)}, \vec{y}^{(1)}, \ldots, \vec{y}^{(v)}\right)} \geq 0 
    \right.\right\},
  \end{equation*}
  where~\(\mathcal{W} \subset \Reals^w\) is a bounded hyperrectangle
  and~\(\NNIn: \Reals^w \to {(\Reals^n)}^v\)
  and~\(\NNSat: \underbrace{\Reals^n \times \cdots \times \Reals^n}_{v\ \text{times}} \times 
  \underbrace{\Reals^m \times \cdots \times \Reals^m}_{v\ \text{times}} \to \Reals\) 
  are neural networks.
\end{definition}
We can think of the neural network~\(\NNIn\) as generating the multi-variable input set
from a single-variable hyperrectangular input space.
The neural network~\(\NNSat\) serves as a 
\emph{satisfaction function}~\cite{BauerMarquartBoetiusLeueEtAl2021}
for the output set.
A satisfaction function is non-negative if and only if an output~---~or, in this case,
a tuple of inputs and outputs~---~lies within the output set or a property or hyperproperty.

It is central to Definition~\ref{defn:nn-hyperprop} that~\(\NNIn\)
and~\(\NNSat\) do not \emph{approximate} our desired input and output set,
but express them \emph{exactly}. 
Usually, we train neural networks to approximate a potentially unknown 
relationship between inputs and outputs.
The neural networks~\(\NNIn\) and~\(\NNSat\), however, are not trained but
carefully constructed to generate our desired input and output set.
As such, these auxiliary neural networks are relatively simple structures
in this paper.
Their main purpose is to make hyperproperties accessible for existing neural
network verification approaches.

We now provide several concrete examples of NNDHs including concrete~\(\NNIn\)
and~\(\NNSat\) networks.
We formalise global monotonicity, 
two notions of global robustness~\cite{KatzBarrettDillEtAl2017a,LeinoWangFredrikson2021}, 
Lipschitz continuity, and dependency 
fairness~\cite{GalhotraBrunMeliou2017,UrbanChristakisWuestholzEtAl2020} as NNDHs.
Afterwards, we show how NNDHs can be verified using existing neural network verifiers
that can handle general computational graphs.

In the following, let~\(\mathcal{X} \subset \Reals^n\) be the bounded hyperrectangular input
domain of the neural network under consideration.
This domain is determined by the target application. 
In the case of image classification, for example,~\(\mathcal{X}\) would be the (normalised)
pixel space.

\begin{example}[Global Monotonicity]\label{example:monot}
  Monotonicity is a desired behaviour of a neural network in
  applications from medicine to aviation~\cite{SivaramanFarnadiMillsteinEtAl2020}.
  Here, we formalise that the output~\(j \in \{1, \ldots, m\}\) may not \emph{increase}
  when input~\(i \in \{1, \ldots, n\}\) increases.
  Non-decreasing monotonicity can be formalised analogously.
  We formalise global monotonicity as a hyperproperty~\(%
    \psi_{M} = \left(\mathcal{X}_{\psi_M}, \mathcal{Y}_{\psi_M}\right)
  \), 
  where the input set~\(\mathcal{X}_{\psi_M} \subseteq \mathcal{X} \times \mathcal{X}\)
  and output set~\(%
    \mathcal{Y}_{\psi_M} \subset \Reals^n \times \Reals^n \times \Reals^m \times \Reals^m
  \) are
  \begin{align*}
  \mathcal{X}_{\psi_M} &= \left\{ 
      \vec{x}^{(1)}, \vec{x}^{(2)} 
      \;\left|\;  \vec{x}^{(2)}_i \geq \vec{x}^{(1)}_i
    \right.\right\} \\
    \mathcal{Y}_{\psi_M} &= \left\{ 
      \vec{x}^{(1)}, \vec{x}^{(2)}, \vec{y}^{(1)}, \vec{y}^{(2)}
      \;\left|\; \vec{y}^{(2)}_j \leq \vec{y}^{(1)}_j
    \right.\right\}.
  \end{align*}
  To generate these sets using neural networks to obtain an NNDH, we define
  \begin{equation*}
      \mathcal{W}_M 
    = \left\{
      \vec{x}^{(1)}_1, \ldots, \vec{x}^{(1)}_n, \vec{x}^{(2)}_1, \ldots, \vec{x}^{(2)}_n 
      \;\left|\; \vec{x}^{(1)}, \vec{x}^{(2)} \in \mathcal{X}
    \right.\right\},
  \end{equation*}
  \begin{gather*}
    \NNIn[M]{\left(\vec{x}^{(1)}_1, \ldots, \vec{x}^{(1)}_n, 
       \vec{x}^{(2)}_1, \ldots, \vec{x}^{(2)}_n\right)} 
    = \left(\vec{x}'^{(1)}, \vec{x}'^{(2)}\right), \\
      \!\begin{aligned}
        \text{where}\ \vec{x}'^{(1)} 
        &= \left( 
          \vec{x}^{(1)}_1, \ldots, 
          \min{\left(\vec{x}^{(1)}_i, \vec{x}^{(2)}_i\right)}, 
          \ldots, \vec{x}^{(1)}_n
        \right) \\
        \vec{x}'^{(2)} 
        &= \left( 
          \vec{x}^{(2)}_1, \ldots, 
          \max{\left(\vec{x}^{(1)}_i, \vec{x}^{(2)}_i\right)}, 
          \ldots, \vec{x}^{(2)}_n
        \right),
      \end{aligned}
  \end{gather*}
  and
  \begin{equation*}
    \NNSat[M]{\left(\vec{x}^{(1)}, \vec{x}^{(2)}, \vec{y}^{(1)}, \vec{y}^{(2)}\right)}
    = \vec{y}^{(1)}_j - \vec{y}^{(2)}_j.
  \end{equation*}
  The function~\(\NNSat[M]\) is a neural network with a single affine layer.
  Concerning~\(\NNIn[M]\), we can compute~\(\max\) either using a maxpooling layer
  or by leveraging~\(%
    \forall a, b \in \Reals: \max(a, b) = \ReLU{a - b} + b
  \)
  where~\(\ReLU{\bullet} = \max(\bullet, 0)\) is ReLU.
  Furthermore, since~\(\forall a, b \in \Reals: \min(a, b) = -\max(-a, -b)\), 
  we can also compute~\(\min\) in a neural network.
  Therefore,~\(\mathcal{W}_M\),~\(\NNIn[M]\) and~\(\NNSat[M]\) together form an NNDH 
  having~\(\mathcal{X}_{\psi_M}\) as its input set and~\(\mathcal{Y}_{\psi_M}\)
  as its output set.
\end{example}

\begin{example}[Global~\(L_\infty\) Robustness following~\cite{KatzBarrettDillEtAl2017a}]%
  \label{example:global-robust-1}
  Neural networks are susceptible to adversarial attacks where slightly modifying the input
  allows an attacker to control the output produced by a 
  neural network~\cite{SzegedyZarembaSutskeverEtAl2014}.
  This is a safety concern, for example, for traffic sign 
  recognition~\cite{EykholtEvtimovFernandesEtAl2018}
  and biometric authentication using face recognition~\cite{SharifBhagavatulaBauerEtAl2016}.
  In this example, we express~\(L_\infty\) global robustness
  according to~\cite{KatzBarrettDillEtAl2017a} as an NNDH.\@
  This specification limits how much the output of a neural network may change
  for inputs that lie within an~\(L_\infty\)-ball of a certain size.
  Let~\(\delta, \varepsilon \in \RealsPos\) be the radius of the \(L_\infty\)-ball
  and the permitted magnitude of change, respectively.
  Let
  \begin{align*}
    \mathcal{W}_{R} 
    &= \left\{%
      \vec{x}_1, \ldots, \vec{x}_n, \varvec{\tau}_1, \ldots, \varvec{\tau}_n
      \mid \vec{x} \in \mathcal{X}, \varvec{\tau} \in {[-\delta, \delta]}^n
    \right\} \\
    \NNIn[R]{\left( \vec{x}_1, \ldots, \vec{x}_n, \varvec{\tau}_1, \ldots, \varvec{\tau}_n \right)}
    &= \left(\vec{x}, \mathrm{project}_{\mathcal{X}}(\vec{x} + \varvec{\tau})\right) \\
    \NNSat[R1]{\left( \vec{x}^{(1)}, \vec{x}^{(2)}, \vec{y}^{(1)}, \vec{y}^{(2)}\right)}
    &= \varepsilon - {\left\| \vec{y}^{(1)} - \vec{y}^{(2)} \right\|}_\infty
    = \varepsilon - \max_{j=1}^m \left| \vec{y}^{(1)}_j - \vec{y}^{(2)}_j \right|,
  \end{align*}
  where~\(\mathrm{project}_{\mathcal{X}}\) computes the projection into the 
  hyperrectangle~\(\mathcal{X}\).
  Projecting a point~\(\vec{x}\) into a hyperrectangle corresponds to
  computing the minimum between each coordinate and the lower boundary of the
  hyperrectangle and the maximum between each coordinate and the upper boundary
  of the hyperrectangle.
  As we show in Example~\ref{example:monot}, we can compute minima and maxima in a
  neural network.
  Similarly,~\(\NNSat{R1}\) computes a maximum and absolute values, which we can compute
  by leveraging~\(\forall a \in \Reals: |a| = \max(a, -a)\).
  Overall,~\(\mathcal{W}_R\),~\(\NNIn[R]\), and~\(\NNSat[R1]\) define an 
  NNDH~\(\psi_{R1} = (\mathcal{X}_{\psi_{R}}, \mathcal{Y}_{\psi_{R1}})\),
  where~\(\mathcal{X}_{\psi_{R}} \subset \mathcal{X} \times \mathcal{X}\)
  and~\(\mathcal{Y}_{\psi_{R1}} \subset \Reals^n \times \Reals^n \times \Reals^m \times \Reals^m\),
  with
  \begin{align*}
    \mathcal{X}_{\psi_{R}} &= \left\{ 
      \vec{x}^{(1)}, \vec{x}^{(2)} 
      \;\left|\; {\left\| \vec{x}^{(1)} - \vec{x}^{(2)} \right\|}_\infty \leq \delta
    \right.\right\} \\
    \mathcal{Y}_{\psi_{R1}} &= \left\{
      \vec{x}^{(1)}, \vec{x}^{(2)}, \vec{y}^{(1)}, \vec{y}^{(2)} 
      \;\left|\; {\left\| \vec{y}^{(1)} - \vec{y}^{(2)} \right\|}_\infty \leq \varepsilon
    \right.\right\}.
  \end{align*}
  This captures that a network is globally robust as defined in~\cite{KatzBarrettDillEtAl2017a}.
\end{example}

\begin{example}[Global~\(L_\infty\) Robustness following~\cite{LeinoWangFredrikson2021}]%
  \label{example:global-robust-2}
  We also present an alternative definition of global robustness using an extra class
  representing non-robustness at an input point~\cite{LeinoWangFredrikson2021}.
  This definition may be more desirable in some applications, as it still permits 
  non-robustness for noise-only \emph{rubbish class} inputs~\cite{GoodfellowShlensSzegedy2015} 
  that lie off the data manifold. 
  Let~\(\delta \in \RealsPos\) be as in Example~\ref{example:global-robust-1}.
  Assume the classifier network we are studying produces an additional output~\(\bot = m+1\)
  that shall indicate non-robustness.
  We reuse~\(\mathcal{X}_{\psi_R}\) from Example~\ref{example:global-robust-1}
  and define~\(\psi_{R2} = (\mathcal{X}_{\psi_R}, \mathcal{Y}_{\psi_{R2}})\),
  where~\(\mathcal{Y}_{\psi_{R2}} \subset \Reals^n \times \Reals^n \times \Reals^{m+1} \times \Reals^{m+1}\)
  and, concretely,
  \begin{equation*}
  \mathcal{Y}_{\psi_{R2}} = \left\{ 
        \vec{x}^{(1)}, \vec{x}^{(2)}, \vec{y}^{(1)}, \vec{y}^{(1)} 
        \;\left|\; 
        N\!R{\left(\vec{y}^{(1)}\right)} \vee
        N\!R{\left(\vec{y}^{(2)}\right)} \vee
        Same{\left(\vec{y}^{(1)}, \vec{y}^{(2)}\right)}
    \right.\right\},
  \end{equation*}
  where
  \begin{align*}
    N\!R(\vec{y}) &= \bigwedge_{j=1}^{m} \vec{y}_\bot^{(k)} \geq \vec{y}_j^{(k)} \\
    Same{\left(\vec{y}^{(1)}, \vec{y}^{(2)}\right)}
    &= 
      \bigvee_{j_1 = 1}^{m} \bigwedge_{k=1}^2 \bigwedge_{j_2=1}^m 
      \vec{y}_{j_1}^{(k)} \geq \vec{y}_{j_2}^{(k)}.
  \end{align*}
  Intuitively,~\(N\!R\) captures that the extra class~\(\bot\) is assigned
  to an input, while~\(Same\) captures that the 
  same class is assigned to~\(\vec{y}^{(1)}\) and~\(\vec{y}^{(2)}\)\footnote{%
    Strictly speaking,~\(Same\) only requires that there is an intersection between
    the largest elements of~\(\vec{y}^{(1)}\) and~\(\vec{y}^{(2)}\).
    This comes into play when the assigned class is ambiguous 
    due to an output having several largest elements.
  }.
  To construct a neural network~\(\NNSat[R2]\) that serves as a satisfaction function for~\(\psi_{R2}\),
  we note that for an arbitrary vector~\(\vec{u} \in \Reals^u\),~\(u \in \Nats\)
  \begin{align}
    & \bigvee_{a \in \mathcal{A}} \bigwedge_{b \in B(a)} 
    \vec{u}_{k_1(a,b)} \geq \vec{u}_{k_2(a,b)} \label{eqn:dnf}\\
    \Leftrightarrow \;\; &
    \left(
      \max_{a \in \mathcal{A}} \min_{b \in B(a)}
      \vec{u}_{k_1(a,b)} - \vec{u}_{k_2(a,b)}
    \right) \geq 0,
  \end{align}
  where~\(\mathcal{A}\) and~\(\mathcal{B}\) are finite 
  sets,~\(B: \mathcal{A} \to 2^{\mathcal{B}}\), 
  and~\(k_1, k_2: \mathcal{A} \times \mathcal{B} \to \Nats\).
  As we can transform any formula in propositional logic into Disjunctive Normal Form,
  we can bring the formula defining~\(\mathcal{Y}_{\psi_{R2}}\) into the form
  of Equation~\eqref{eqn:dnf}.
  Therefore, since we can compute~\(\min\) and~\(\max\) using a neural network (Example~\ref{example:monot}),
  we can define a neural network~\(\NNSat[R2]\) serving as a satisfaction function for~\(\psi_{R2}\).
  Together with~\(\mathcal{W}\) and~\(\NNIn[R]\) from Example~\ref{example:global-robust-1},~\(\NNSat[R2]\)
  defines an NNDH with the same input and output set as~\(\psi_{R2}\).
\end{example}

\begin{example}[Lipschitz Continuity]\label{example:lipschitz-cont}
  The Lipschitz continuity of a neural network is linked not only to
  robustness~\cite{SzegedyZarembaSutskeverEtAl2014}
  but also to fairness~\cite{DworkHardtPitassiEtAl2012}, 
  generalisation~\cite{BartlettFosterTelgarsky2017}, and 
  explainability~\cite{FelVigourouxCadeneEtAl2022}.
  While many neural network architectures are always Lipschitz continuous~\cite{%
    CisseBojanowskiGraveEtAl2017,
    SzegedyZarembaSutskeverEtAl2014,
    RuanHuangKwiatkowska2018
  }, it is the magnitude of the Lipschitz constant that 
  matters~\cite{CisseBojanowskiGraveEtAl2017}.
  Let~\(K \in \RealsNonNeg\) be the desired global Lipschitz constant.
  Define~\(%
    \mathcal{W}_{C} = \left\{\left.
      \vec{x}^{(1)}_1, \ldots, \vec{x}^{(1)}_n, \vec{x}^{(2)}_1, \ldots, \vec{x}^{(2)}_n
      \;\right|\; \vec{x}^{(1)}, \vec{x}^{(2)} \in \mathcal{X}
    \right\}
  \) and
  \begin{align*}
    \NNIn[C]{\left(\vec{x}^{(1)}_1, \ldots, \vec{x}^{(1)}_n, \vec{x}^{(2)}_1, \ldots, \vec{x}^{(2)}_n\right)}
    &=
    \left(\vec{x}^{(1)}, \vec{x}^{(2)}\right) \\
    \NNSat[C]{\left(\vec{x}^{(1)}, \vec{x}^{(2)}, \vec{y}^{(1)}, \vec{y}^{(2)}\right)} 
    &=
    K{\left\|\vec{x}^{(1)} - \vec{x}^{(2)}\right\|}_\infty
    - {\left\|\vec{y}^{(1)} - \vec{y}^{(2)}\right\|}_\infty.
  \end{align*}
  First,~\(\NNIn[C]\) is an identity function and, thus, a trivial neural network.
  Then, by computing~\(\|\bullet\|_\infty\) as in Example~\ref{example:global-robust-1}
  in a neural network, we obtain an 
  NNDH~\(\psi_{C} = (\mathcal{X}_{\psi_C}, \mathcal{Y}_{\psi_C})\) with
  \begin{align*}
      \mathcal{X}_{\psi_C} &= \mathcal{X} \times \mathcal{X} \\
      \mathcal{Y}_{\psi_C} &= \left\{
        \vec{x}^{(1)}, \vec{x}^{(2)}, \vec{y}^{(1)}, \vec{y}^{(2)}
        \;\left|\;
        \left\| \vec{y}^{(1)} - \vec{y}^{(2)} \right\|_\infty \leq
        K\left\| \vec{x}^{(1)} - \vec{x}^{(2)} \right\|_\infty
      \right.\right\},
  \end{align*}
  which corresponds to Lipschitz continuity with Lipschitz constant~\(K\).
\end{example}

\begin{example}[Dependency Fairness]\label{example:dependency-fairness} 
  Machine learning applications from automated hiring~\cite{BogenRieke2018} to 
  image classification~\cite{PrabhuBirhane2020} bear the danger of producing 
  unfair machine-learning models. 
  However, in some applications, ensuring fairness may be legally
  required~\cite{PedreschiRuggieriTurini2008}. 
  One fairness requirement that we may pose is that 
  \enquote{similar individuals are treated similarly}~\cite{DworkHardtPitassiEtAl2012}. 
  \emph{Dependency fairness}~\cite{%
    GalhotraBrunMeliou2017,%
    UrbanChristakisWuestholzEtAl2020%
  } is a fairness criterion based on this idea\footnote{%
    We believe dependency fairness is an overly simplistic fairness criterion 
    as it can be trivially satisfied by withholding sensitive attributes 
    from the neural network, which is known to be insufficient 
    for real-world fairness~\cite{BarocasHardtNarayanan2019}.
    However, we still think that dependency fairness is suitable as an example
    for experimenting with verifying global specifications.
  }.
  Assume the first dimension of the input space is a categorical sensitive attribute
  with~\(A \in \Nats\) disjoint values.
  We consider two inputs to be \emph{similar}
  if they are equal except for the sensitive attribute.
  Dependency fairness specifies that all similar inputs
  are assigned to the same class.
  Let~\(\psi_{F} = (\mathcal{X}_{\psi_F}, \mathcal{Y}_{\psi_F})\),
  with~\(\mathcal{X}_{\psi_F} \subset \mathcal{X}^A\),~\(%
    \mathcal{Y}_{\psi_F} \subset {\left(\Reals^n\right)}^A \times {\left(\Reals^m\right)}^A
  \), where
  \begin{align*}
    \mathcal{X}_{\psi_F} &= \left\{
      \vec{x}^{(1)}, \ldots, \vec{x}^{(A)} 
      \;\left|\;
      \begin{array}{ll}
        \forall k \in \{1, \ldots, A\}: \\
          \quad \Big( \vec{x}^{(k)}_1 = k \wedge
          \forall i \in \{2, \ldots, n\}: 
          \vec{x}^{(1)}_i = \vec{x}^{(k)}_i
        \Big)
      \end{array}
    \right.\right\} \\
    \mathcal{Y}_{\psi_F} &= \left\{
      \vec{x}^{(1)}, \ldots, \vec{x}^{(A)}, \vec{y}^{(1)}, \ldots, \vec{y}^{(A)} \;\left|\;
      \bigvee_{j_1=1}^{m}
        \bigwedge_{k=1}^A
        \bigwedge_{j_2=1}^m 
        \vec{y}^{(k)}_{j_1} \geq \vec{y}^{(k)}_{j_2}
    \right.\right\}.
  \end{align*}
  We can construct a neural network satisfaction function~\(\NNSat[F]\) 
  for this property analogously to Example~\ref{example:global-robust-2}.
  The input set~\(\mathcal{X}_{\psi_F}\) consists of tuples of similar inputs
  which contain each value of the sensitive attribute in a fixed order.
  Let~\(\mat{A} \in \Reals^{n \times n}\) be the diagonal matrix
  with \(0, 1, \ldots, 1\) on its diagonal.
  Let~\(\mathrm{assign}: \Nats \times \Reals \to \Reals\) 
  be an affine function with~\(\mathrm{assign}(k, \vec{x}) = \vec{A}\vec{x} + {(k, 0, \ldots, 0)}^T\).
  Define~\(\mathcal{W} = \mathcal{X}\) and~\(
    \NNIn[F](\vec{x}) = \left( \mathrm{assign}(1, \vec{x}), \ldots, \mathrm{assign}(A, \vec{x}) \right)
  \).
  Since~\(\mathrm{assign}\) is affine,~\(\NNIn[F]\) is a neural network.
  Overall,~\(\mathcal{W}\),~\(\NNIn[F]\), and~\(\NNSat[F]\) define an NNDH
  with the same input and output set as~\(\psi_F\).
\end{example}

These examples demonstrate that Definition~\ref{defn:nn-hyperprop} is an expressive
specification formalism, despite restricting input and output sets to be defined by
neural networks.
It remains to show that we can indeed verify NNDHs using existing 
neural network verification approaches.
This builds upon the ability to verify general computational graphs.
In~\cite{XuShiZhangEtAl2020b}, the Linear Relaxation-based
Perturbation Analysis (LiRPA) framework is extended to general computational graphs.
LiRPA underlies verifiers such as 
\(\alpha,\!\beta\)-CROWN~\cite{ZhangWangXuEtAl2022}
and ERAN~\cite{SinghGehrPueschelEtAl2019},
and is used in Marabou~\cite{KatzHuangIbelingEtAl2019}
and MN-BaB~\cite{FerrariMuellerJovanovicEtAl2022}, among others.\@
Among these verifiers,~\(\alpha,\!\beta\)-CROWN already supports verifying
general computational graphs.

The central idea in verifying an NNDH~\(\psi\) is to compose the 
network to verify~\(\NN\) with itself and 
the networks~\(\NNIn[\psi]\) and~\(\NNSat[\psi]\)
that define the input and output set of~\(\psi\).
\begin{theorem}[NNDH Verification]\label{thm:nn-hyperprop-verif}
  Let~\(\psi = (\mathcal{X}_\psi, \mathcal{Y}_\psi)\) 
  with~\(\mathcal{W} \subseteq \Reals^w\),~\(\NNIn: \Reals^w \to {\left(\Reals^n\right)}^v\)
  and~\(\NNSat: {\left(\Reals^m\right)}^v \to \Reals\) be an NNDH.\@
  Let~\(\NN: \Reals^n \to \Reals^m\) be a neural network.
  Define~\(\NN': \Reals^w \to \Reals\) as
  \begin{align*}
    \NN'{(\vec{w})} &= \NNSat{\left(
        \vec{x}^{(1)}, \ldots, \vec{x}^{(v)},
        \NN{\left(\vec{x}^{(1)}\right)}, \ldots, 
        \NN{\left(\vec{x}^{(v)}\right)}
    \right)} \\
                    & \quad\; \text{where}\ \vec{x}^{(1)}, \ldots, \vec{x}^{(v)} = \NNIn(\vec{w}).
  \end{align*}
  Further, let~\(\varphi = (\mathcal{W}, \RealsNonNeg)\).
  It holds that~\(\NN' \vDash \varphi \Leftrightarrow \NN \vDash \psi\).
\end{theorem}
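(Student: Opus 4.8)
The plan is to prove both directions simultaneously by unfolding Definitions~\ref{defn:prop},~\ref{defn:hyperprop}, and~\ref{defn:nn-hyperprop} until \(\NN' \vDash \varphi\) and \(\NN \vDash \psi\) reduce to the same quantified inequality; the only substantive point is a change of quantifier range that is licensed precisely because \(\NNIn\) captures \(\mathcal{X}_\psi\) \emph{exactly} rather than approximately.

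First I would rewrite the left-hand side. Since \(\varphi = (\mathcal{W}, \RealsNonNeg)\), Definition~\ref{defn:prop} gives \(\NN' \vDash \varphi \Leftrightarrow \forall \vec{w} \in \mathcal{W}: \NN'(\vec{w}) \geq 0\). Substituting the definition of \(\NN'\) from the theorem statement, this becomes: for every \(\vec{w} \in \mathcal{W}\), setting \((\vec{x}^{(1)}, \ldots, \vec{x}^{(v)}) = \NNIn(\vec{w})\), we have \(\NNSat(\vec{x}^{(1)}, \ldots, \vec{x}^{(v)}, \NN(\vec{x}^{(1)}), \ldots, \NN(\vec{x}^{(v)})) \geq 0\). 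Call this condition \((\star)\). (That \(\NN'\) is a well-defined total function \(\Reals^w \to \Reals\) needs no argument: it is the composition of the total maps \(\NNIn\), then \(\NN\) applied to each of the \(v\) outputs, then \(\NNSat\).)

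Next I would rewrite the right-hand side. By Definition~\ref{defn:hyperprop}, reading ``\(\forall \vec{x}^{(1)}, \ldots, \vec{x}^{(v)} \in \mathcal{X}_\psi\)'' as quantification over the \(v\)-tuples making up \(\mathcal{X}_\psi\), \(\NN \vDash \psi\) holds iff for every \((\vec{x}^{(1)}, \ldots, \vec{x}^{(v)}) \in \mathcal{X}_\psi\) the tuple \((\vec{x}^{(1)}, \ldots, \vec{x}^{(v)}, \NN(\vec{x}^{(1)}), \ldots, \NN(\vec{x}^{(v)}))\) lies in \(\mathcal{Y}_\psi\). By the description of \(\mathcal{Y}_\psi\) in Definition~\ref{defn:nn-hyperprop}, this membership is by definition equivalent to \(\NNSat(\vec{x}^{(1)}, \ldots, \vec{x}^{(v)}, \NN(\vec{x}^{(1)}), \ldots, \NN(\vec{x}^{(v)})) \geq 0\). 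Hence \(\NN \vDash \psi\) is equivalent to: for every \((\vec{x}^{(1)}, \ldots, \vec{x}^{(v)}) \in \mathcal{X}_\psi\), that inequality holds; call this \((\star\star)\). To finish, I would show \((\star) \Leftrightarrow (\star\star)\) using \(\mathcal{X}_\psi = \{\NNIn(\vec{w}) \mid \vec{w} \in \mathcal{W}\}\), i.e.\ that \(\NNIn\) restricted to \(\mathcal{W}\) surjects onto \(\mathcal{X}_\psi\): for \((\star\star) \Rightarrow (\star)\), any \(\vec{w} \in \mathcal{W}\) yields \(\NNIn(\vec{w}) \in \mathcal{X}_\psi\), so \((\star\star)\) at that tuple is exactly \((\star)\) at \(\vec{w}\); for \((\star) \Rightarrow (\star\star)\), any tuple in \(\mathcal{X}_\psi\) equals \(\NNIn(\vec{w})\) for some \(\vec{w} \in \mathcal{W}\), so \((\star)\) at \(\vec{w}\) is \((\star\star)\) at that tuple. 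Chaining the equivalences yields \(\NN' \vDash \varphi \Leftrightarrow \NN \vDash \psi\).

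I do not expect a real obstacle: the whole proof is a definitional unfolding. The one step that needs care is the last one — trading ``for all \(\vec{w} \in \mathcal{W}\)'' for ``for all tuples in \(\mathcal{X}_\psi\)'' — which works only because \(\NNIn(\mathcal{W})\) is \emph{equal} to \(\mathcal{X}_\psi\): an under-approximation would break \((\star) \Rightarrow (\star\star)\) and an over-approximation would break the converse, and the analogous remark applies to \(\NNSat\) and \(\mathcal{Y}_\psi\). This is exactly the reason Definition~\ref{defn:nn-hyperprop} insists that \(\NNIn\) and \(\NNSat\) express the input and output sets exactly. A second, purely cosmetic, matter is simply being explicit that the \(v\) ``variables'' of Definition~\ref{defn:hyperprop} are to be read as the components of a single \(v\)-tuple drawn from \(\mathcal{X}_\psi\).
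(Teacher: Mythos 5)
Your proposal is correct and follows exactly the route the paper intends: the paper's proof is simply the one-line remark that the theorem ``follows from applying Definitions~\ref{defn:prop} and~\ref{defn:nn-hyperprop}'', and your argument is precisely that definitional unfolding carried out explicitly, with the surjectivity of \(\NNIn\) onto \(\mathcal{X}_\psi\) correctly identified as the one step doing real work. No gaps.
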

\begin{proof}
  Theorem~\ref{thm:nn-hyperprop-verif} follows from
  applying Definitions~\ref{defn:prop} and~\ref{defn:nn-hyperprop}.
  \qed{}
\end{proof}

Figure~\ref{fig:self-composition} visualises~\(\NN'\) 
from Theorem~\ref{thm:nn-hyperprop-verif}.
We construct a new computational graph by generating several inputs
using~\(\NNIn\) and feeding each input to a separate copy of~\(\NN\).
Finally,~\(\NNSat\) takes the generated inputs and the output
of each copy of~\(\NN\) and computes the satisfaction function value.
Considering several copies of the same artefact is known as 
\emph{self-composition}~\cite{ClarksonSchneider2008}.
As Theorem~\ref{thm:nn-hyperprop-verif} shows, verifying an NNDH~\(\psi\)
corresponds to verifying a property~\(\varphi\) of the new computational graph~\(\NN'\).
Overall,~\(\NN'\) has a more complicated graph structure
than~\(\NN\), but it only contains computations that also appear
in~\(\NN\),~\(\NNIn\) or~\(\NNSat\).
Therefore,~\(\psi\) can be verified using verifiers that can verify~\(\NN\),~\(\NNIn\)
and~\(\NNSat\) and support general computational graphs.

\begin{figure}[t]
  \centering
  \includegraphics[width=\textwidth]{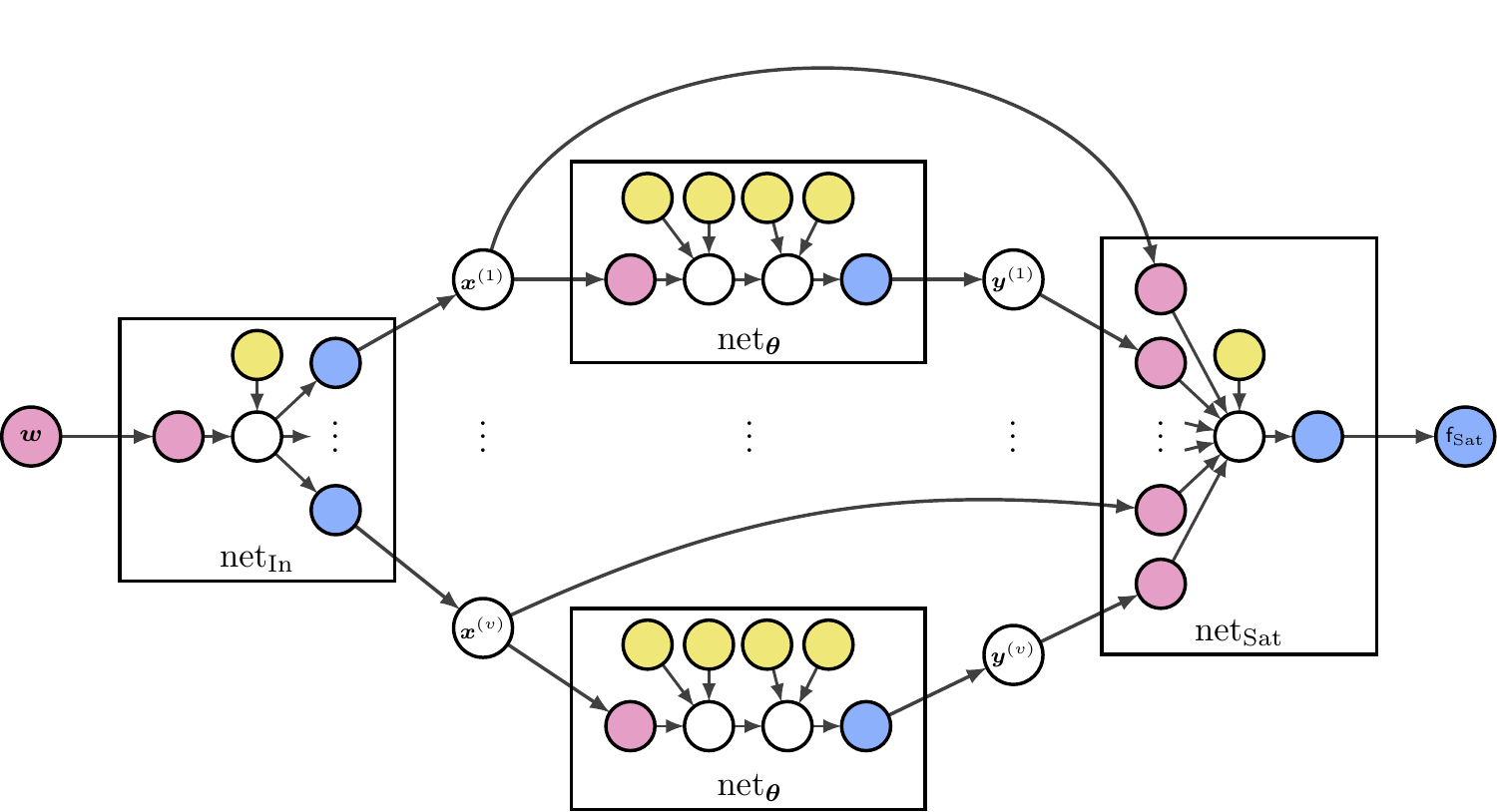}
  \caption{%
    \textbf{Computational Graph for Verifying NNDHs.}
    Verifying an NNDH (Definition~\ref{defn:nn-hyperprop})
    reduces to verifying an input-output property of the computational graph in this figure.
    The boxes enclose sub-graphs of the computational graph.
    The contents of each box are placeholders.
    Pink nodes~\includegraphics{input-color-legend.pdf} represent inputs, 
    yellow nodes~\includegraphics{param-color-legend.pdf} represent parameters,
    and blue nodes~\includegraphics{output-color-legend.pdf} represent outputs.
    The input and output nodes in each sub-graph are repetitions of 
    their direct predecessors or direct successors outside
    of the subgraph.
    The inputs of~\(\NNSat\) were rearranged for better legibility.
  }\label{fig:self-composition}
\end{figure}

\section{Related Work}
Using self-composition for verifying specific global specifications
was explored previously~\cite{KatzBarrettDillEtAl2017a,KhedrShoukry2022}.
We use self-composition for verifying a range of global specifications.
Improved encodings of self-composition~\cite{WangHuangZhu2022}
and approaches from differential verification of 
neural networks~\cite{PaulsenWangWang2020}
are interesting directions for improving the verification of NNDHs.

In 2017, verifying global robustness was found to be infeasible 
using the then-available verifiers~\cite{KatzBarrettDillEtAl2017a}.
Recent approaches to global robustness~\cite{WangHuangZhu2022}
and global fairness specifications~\cite{UrbanChristakisWuestholzEtAl2020} 
have demonstrated that verifying global specifications is feasible today.
The reason behind this could be that practically, neural networks 
appear not to realise their full combinatorial potential~\cite{HaninRolnick2019},
in a way that allows for efficient branch-and-bound 
verification~\cite{UrbanChristakisWuestholzEtAl2020}.

\section{Conclusion}
We present a versatile formalism for expressing global specifications 
while maintaining compatibility with existing verification approaches. 
Evaluating this approach empirically remains future work.
A promising verifier for this approach is~\(\alpha,\!\beta\)-CROWN~\cite{ZhangWangXuEtAl2022}, as it already supports verifying arbitrary computational graphs.
An interesting direction is comparing our generally applicable approach 
with approaches specialised to individual global specifications, such as,
global robustness~\cite{WangHuangZhu2022}, dependency fairness~\cite{UrbanChristakisWuestholzEtAl2020} 
and Lipschitz continuity~\cite{BhowmickDSouzaRaghavan2021}.

%
%
%
\bibliographystyle{splncs04}
\bibliography{references}

\begin{thebibliography}{10}
\providecommand{\url}[1]{\texttt{#1}}
\providecommand{\urlprefix}{URL }
\providecommand{\doi}[1]{https://doi.org/#1}

\bibitem{BakLiuJohnson2021}
Bak, S., Liu, C., Johnson, T.T.: {T}he {S}econd {I}nternational {V}erification
  of {N}eural {N}etworks {C}ompetition {(VNN-COMP} 2021): {S}ummary and
  {R}esults. CoRR  \textbf{abs/2109.00498} (2021).
  \doi{10.48550/ARXIV.2109.00498}

\bibitem{BarocasHardtNarayanan2019}
Barocas, S., Hardt, M., Narayanan, A.: {F}airness and {M}achine {L}earning.
  fairmlbook.org (2019), \url{https://fairmlbook.org/}

\bibitem{BartlettFosterTelgarsky2017}
Bartlett, P.L., Foster, D.J., Telgarsky, M.: Spectrally-normalized margin
  bounds for neural networks. In: Guyon, I., von Luxburg, U., Bengio, S.,
  Wallach, H.M., Fergus, R., Vishwanathan, S.V.N., Garnett, R. (eds.) {NIPS}.
  pp. 6240--6249 (2017),
  \url{https://proceedings.neurips.cc/paper/2017/hash/b22b257ad0519d4500539da3c8bcf4dd-Abstract.html}

\bibitem{BauerMarquartBoetiusLeueEtAl2021}
Bauer{-}Marquart, F., Boetius, D., Leue, S., Schilling, C.: {S}pec{R}epair:
  {C}ounter-{E}xample {G}uided {S}afety {R}epair of {D}eep {N}eural {N}etworks.
  In: Legunsen, O., Rosu, G. (eds.) {SPIN}. Lecture Notes in Computer Science,
  vol. 13255, pp. 79--96. Springer (2022). \doi{10.1007/978-3-031-15077-7_5}

\bibitem{BhowmickDSouzaRaghavan2021}
Bhowmick, A., D'Souza, M., Raghavan, G.S.: {LipBaB}: {C}omputing {E}xact
  {L}ipschitz {C}onstant of {ReLU} {N}etworks. In: Farkas, I., Masulli, P.,
  Otte, S., Wermter, S. (eds.) {ICANN} {(4)}. Lecture Notes in Computer
  Science, vol. 12894, pp. 151--162. Springer (2021).
  \doi{10.1007/978-3-030-86380-7_13}

\bibitem{BogenRieke2018}
Bogen, M., Rieke, A.: Help wanted: {A}n {E}xamination of {H}iring {A}lgorithms,
  {E}quity, and {B}ias. Report, Upturn (2018),
  \url{https://www.upturn.org/work/help-wanted/}

\bibitem{CisseBojanowskiGraveEtAl2017}
Ciss{\'{e}}, M., Bojanowski, P., Grave, E., Dauphin, Y.N., Usunier, N.:
  {P}arseval {N}etworks: {I}mproving {R}obustness to {A}dversarial {E}xamples.
  In: Precup, D., Teh, Y.W. (eds.) ICML. Proceedings of Machine Learning
  Research, vol.~70, pp. 854--863. {PMLR} (2017),
  \url{http://proceedings.mlr.press/v70/cisse17a.html}

\bibitem{ClarksonSchneider2008}
Clarkson, M.R., Schneider, F.B.: Hyperproperties. In: CSF. pp. 51--65. {IEEE}
  Computer Society (2008). \doi{10.1109/CSF.2008.7}

\bibitem{DwivediKshetriHughesEtAl2023}
Dwivedi, Y.K., Kshetri, N., Hughes, L., Slade, E.L., Jeyaraj, A., Kar, A.K.,
  Baabdullah, A.M., Koohang, A., Raghavan, V., Ahuja, M., Albanna, H.,
  Albashrawi, M.A., Al-Busaidi, A.S., Balakrishnan, J., Barlette, Y., Basu, S.,
  Bose, I., Brooks, L., Buhalis, D., Carter, L., Chowdhury, S., Crick, T.,
  Cunningham, S.W., Davies, G.H., Davison, R.M., Dé, R., Dennehy, D., Duan,
  Y., Dubey, R., Dwivedi, R., Edwards, J.S., Flavián, C., Gauld, R., Grover,
  V., Hu, M.C., Janssen, M., Jones, P., Junglas, I., Khorana, S., Kraus, S.,
  Larsen, K.R., Latreille, P., Laumer, S., Malik, F.T., Mardani, A., Mariani,
  M., Mithas, S., Mogaji, E., Nord, J.H., O’Connor, S., Okumus, F., Pagani,
  M., Pandey, N., Papagiannidis, S., Pappas, I.O., Pathak, N., Pries-Heje, J.,
  Raman, R., Rana, N.P., Rehm, S.V., Ribeiro-Navarrete, S., Richter, A., Rowe,
  F., Sarker, S., Stahl, B.C., Tiwari, M.K., {van der Aalst}, W., Venkatesh,
  V., Viglia, G., Wade, M., Walton, P., Wirtz, J., Wright, R.: ``{S}o what if
  {ChatGPT} wrote it?'' {M}ultidisciplinary perspectives on opportunities,
  challenges and implications of generative conversational ai for research,
  practice and policy. Int. J. Inf. Manag.  \textbf{71},  102642 (2023).
  \doi{https://doi.org/10.1016/j.ijinfomgt.2023.102642}

\bibitem{DworkHardtPitassiEtAl2012}
Dwork, C., Hardt, M., Pitassi, T., Reingold, O., Zemel, R.S.: Fairness through
  awareness. In: Goldwasser, S. (ed.) {ITCS}. pp. 214--226. {ACM} (2012).
  \doi{10.1145/2090236.2090255}

\bibitem{EloundouManningMishkinEtAl2023}
Eloundou, T., Manning, S., Mishkin, P., Rock, D.: {GPT}s are {GPT}s: An early
  look at the labor market impact potential of large language models. CoRR
  \textbf{abs/2303.10130} (2023). \doi{10.48550/arXiv.2303.10130}

\bibitem{EykholtEvtimovFernandesEtAl2018}
Eykholt, K., Evtimov, I., Fernandes, E., Li, B., Rahmati, A., Xiao, C.,
  Prakash, A., Kohno, T., Song, D.: Robust physical-world attacks on deep
  learning visual classification. In: {CVPR}. pp. 1625--1634. Computer Vision
  Foundation / {IEEE} Computer Society (2018). \doi{10.1109/CVPR.2018.00175}

\bibitem{FelVigourouxCadeneEtAl2022}
Fel, T., Vigouroux, D., Cad{\`{e}}ne, R., Serre, T.: How good is your
  explanation? algorithmic stability measures to assess the quality of
  explanations for deep neural networks. In: {WACV}. pp. 1565--1575. {IEEE}
  (2022). \doi{10.1109/WACV51458.2022.00163}

\bibitem{FerrariMuellerJovanovicEtAl2022}
Ferrari, C., Mueller, M.N., Jovanovi{\'c}, N., Vechev, M.: {C}omplete
  {V}erification via {M}ulti-{N}euron {R}elaxation {G}uided
  {B}ranch-and-{B}ound. In: {ICLR}. OpenReview.net (2022),
  \url{https://openreview.net/forum?id=l_amHf1oaK}

\bibitem{GalhotraBrunMeliou2017}
Galhotra, S., Brun, Y., Meliou, A.: Fairness testing: testing software for
  discrimination. In: Bodden, E., Sch{\"{a}}fer, W., van Deursen, A., Zisman,
  A. (eds.) {ESEC/SIGSOFT} {FSE}. pp. 498--510. {ACM} (2017).
  \doi{10.1145/3106237.3106277}

\bibitem{GoodfellowBengioCourville2016}
Goodfellow, I.J., Bengio, Y., Courville, A.C.: {D}eep {L}earning. Adaptive
  computation and machine learning, {MIT} Press (2016),
  \url{http://www.deeplearningbook.org/}

\bibitem{GoodfellowShlensSzegedy2015}
Goodfellow, I.J., Shlens, J., Szegedy, C.: {E}xplaining and {H}arnessing
  {A}dversarial {E}xamples. In: Bengio, Y., LeCun, Y. (eds.) {ICLR} (Poster)
  (2015). \doi{10.48550/ARXIV.1412.6572}

\bibitem{HaninRolnick2019}
Hanin, B., Rolnick, D.: {D}eep {ReLU} {N}etworks {H}ave {S}urprisingly {F}ew
  {A}ctivation {P}atterns. In: Wallach, H.M., Larochelle, H., Beygelzimer, A.,
  d'Alch{\'{e}}{-}Buc, F., Fox, E.B., Garnett, R. (eds.) NeurIPS. pp. 359--368
  (2019),
  \url{https://proceedings.neurips.cc/paper/2019/hash/9766527f2b5d3e95d4a733fcfb77bd7e-Abstract.html}

\bibitem{HeZhangRenEtAl2016a}
He, K., Zhang, X., Ren, S., Sun, J.: {I}dentity {M}appings in {D}eep {R}esidual
  {N}etworks. In: Leibe, B., Matas, J., Sebe, N., Welling, M. (eds.) {ECCV}
  {(4)}. Lecture Notes in Computer Science, vol.~9908, pp. 630--645. Springer
  (2016). \doi{10.1007/978-3-319-46493-0_38}

\bibitem{HenriksenLomuscio2021}
Henriksen, P., Lomuscio, A.: {DEEPSPLIT:} {A}n {E}fficient {S}plitting {M}ethod
  for {N}eural {N}etwork {V}erification via {I}ndirect {E}ffect {A}nalysis. In:
  Zhou, Z. (ed.) IJCAI. pp. 2549--2555. ijcai.org (2021).
  \doi{10.24963/ijcai.2021/351}

\bibitem{KatzBarrettDillEtAl2017a}
Katz, G., Barrett, C.W., Dill, D.L., Julian, K., Kochenderfer, M.J.: {T}owards
  {P}roving the {A}dversarial {R}obustness of {D}eep {N}eural {N}etworks. In:
  Bulwahn, L., Kamali, M., Linker, S. (eds.) FVAV@iFM. {EPTCS}, vol.~257, pp.
  19--26 (2017). \doi{10.4204/EPTCS.257.3}

\bibitem{KatzHuangIbelingEtAl2019}
Katz, G., Huang, D.A., Ibeling, D., Julian, K., Lazarus, C., Lim, R., Shah, P.,
  Thakoor, S., Wu, H., Zeljic, A., Dill, D.L., Kochenderfer, M.J., Barrett,
  C.W.: {T}he {M}arabou {F}ramework for {V}erification and {A}nalysis of {D}eep
  {N}eural {N}etworks. In: {CAV} {(1)}. Lecture Notes in Computer Science, vol.
  11561, pp. 443--452. Springer (2019). \doi{10.1007/978-3-030-25540-4_26}

\bibitem{KhedrShoukry2022}
Khedr, H., Shoukry, Y.: {CertiFair}: {A} {F}ramework for {C}ertified {G}lobal
  {F}airness of {N}eural {N}etworks. CoRR  \textbf{abs/2205.09927} (2022).
  \doi{10.48550/arXiv.2205.09927}

\bibitem{LeinoWangFredrikson2021}
Leino, K., Wang, Z., Fredrikson, M.: {G}lobally-{R}obust {N}eural {N}etworks.
  In: Meila, M., Zhang, T. (eds.) {ICML}. Proceedings of Machine Learning
  Research, vol.~139, pp. 6212--6222. {PMLR} (2021),
  \url{http://proceedings.mlr.press/v139/leino21a.html}

\bibitem{MuellerBrixBakEtAl2022}
M{\"{u}}ller, M.N., Brix, C., Bak, S., Liu, C., Johnson, T.T.: {T}he {T}hird
  {I}nternational {V}erification of {N}eural {N}etworks {C}ompetition
  {(VNN-COMP} 2022): {S}ummary and {R}esults. CoRR  \textbf{abs/2212.10376}
  (2022). \doi{10.48550/arXiv.2212.10376}

\bibitem{PaulsenWangWang2020}
Paulsen, B., Wang, J., Wang, C.: {R}elu{D}iff: differential verification of
  deep neural networks. In: Rothermel, G., Bae, D. (eds.) {ICSE}. pp. 714--726.
  {ACM} (2020). \doi{10.1145/3377811.3380337}

\bibitem{PedreschiRuggieriTurini2008}
Pedreschi, D., Ruggieri, S., Turini, F.: Discrimination-aware data mining. In:
  Li, Y., Liu, B., Sarawagi, S. (eds.) KDD. pp. 560--568. {ACM} (2008).
  \doi{10.1145/1401890.1401959}

\bibitem{PrabhuBirhane2020}
Prabhu, V.U., Birhane, A.: Large image datasets: {A} pyrrhic win for computer
  vision? CoRR  \textbf{abs/2006.16923} (2020). \doi{10.48550/ARXIV.2006.16923}

\bibitem{RuanHuangKwiatkowska2018}
Ruan, W., Huang, X., Kwiatkowska, M.: {R}eachability {A}nalysis of {D}eep
  {N}eural {N}etworks with {P}rovable {G}uarantees. In: {IJCAI}. pp.
  2651--2659. ijcai.org (2018). \doi{10.24963/ijcai.2018/368}

\bibitem{SeshiaDesaiDreossiEtAl2018}
Seshia, S.A., Desai, A., Dreossi, T., Fremont, D.J., Ghosh, S., Kim, E.,
  Shivakumar, S., Vazquez{-}Chanlatte, M., Yue, X.: {F}ormal {S}pecification
  for {D}eep {N}eural {N}etworks. In: Lahiri, S.K., Wang, C. (eds.) {ATVA}.
  Lecture Notes in Computer Science, vol. 11138, pp. 20--34. Springer (2018).
  \doi{10.1007/978-3-030-01090-4_2}

\bibitem{SharifBhagavatulaBauerEtAl2016}
Sharif, M., Bhagavatula, S., Bauer, L., Reiter, M.K.: Accessorize to a crime:
  Real and stealthy attacks on state-of-the-art face recognition. In: Weippl,
  E.R., Katzenbeisser, S., Kruegel, C., Myers, A.C., Halevi, S. (eds.) {CCS}.
  pp. 1528--1540. {ACM} (2016). \doi{10.1145/2976749.2978392}

\bibitem{SinghGehrPueschelEtAl2019}
Singh, G., Gehr, T., P{\"{u}}schel, M., Vechev, M.T.: {A}n {A}bstract {D}omain
  for {C}ertifying {N}eural {N}etworks. Proc. {ACM} Program. Lang.
  \textbf{3}({POPL}),  41:1--41:30 (2019). \doi{10.1145/3290354}

\bibitem{SivaramanFarnadiMillsteinEtAl2020}
Sivaraman, A., Farnadi, G., Millstein, T.D., den Broeck, G.V.:
  {C}ounterexample-{G}uided {L}earning of {M}onotonic {N}eural {N}etworks. In:
  Larochelle, H., Ranzato, M., Hadsell, R., Balcan, M., Lin, H. (eds.)
  {NeurIPS} (2020),
  \url{https://proceedings.neurips.cc/paper/2020/hash/8ab70731b1553f17c11a3bbc87e0b605-Abstract.html}

\bibitem{SzegedyZarembaSutskeverEtAl2014}
Szegedy, C., Zaremba, W., Sutskever, I., Bruna, J., Erhan, D., Goodfellow,
  I.J., Fergus, R.: Intriguing properties of neural networks. In: Bengio, Y.,
  LeCun, Y. (eds.) {ICLR} (Poster) (2014). \doi{10.48550/ARXIV.1312.6199}

\bibitem{UrbanChristakisWuestholzEtAl2020}
Urban, C., Christakis, M., W{\"{u}}stholz, V., Zhang, F.: Perfectly parallel
  fairness certification of neural networks. Proc. {ACM} Program. Lang.
  \textbf{4}({OOPSLA}),  185:1--185:30 (2020). \doi{10.1145/3428253}

\bibitem{WangHuangZhu2022}
Wang, Z., Huang, C., Zhu, Q.: {E}fficient {G}lobal {R}obustness {C}ertification
  of {N}eural {N}etworks via {I}nterleaving {T}win-{N}etwork {E}ncoding. CoRR
  \textbf{abs/2203.14141} (2022). \doi{10.48550/arXiv.2203.14141}

\bibitem{XuShiZhangEtAl2020b}
Xu, K., Shi, Z., Zhang, H., Wang, Y., Chang, K., Huang, M., Kailkhura, B., Lin,
  X., Hsieh, C.: {A}utomatic {P}erturbation {A}nalysis for {S}calable
  {C}ertified {R}obustness and {B}eyond. In: Larochelle, H., Ranzato, M.,
  Hadsell, R., Balcan, M., Lin, H. (eds.) NeurIPS (2020),
  \url{https://proceedings.neurips.cc/paper/2020/hash/0cbc5671ae26f67871cb914d81ef8fc1-Abstract.html}

\bibitem{ZhangWangXuEtAl2022}
Zhang, H., Wang, S., Xu, K., Li, L., Li, B., Jana, S., Hsieh, C., Kolter, J.Z.:
  {G}eneral {C}utting {P}lanes for {B}ound-{P}ropagation-{B}ased {N}eural
  {N}etwork {V}erification. In: Oh, A.H., Agarwal, A., Belgrave, D., Cho, K.
  (eds.) NeurIPS (2022), \url{https://openreview.net/forum?id=5haAJAcofjc}

\end{thebibliography}
\end{document}